\def\algbackskip{\hskip-\ALG@thistlm}
\title{Exact Partitioning of High-order Planted Models \\ with a Tensor Nuclear Norm Constraint}
\author{
  \textbf{Chuyang Ke}\\Department of Computer Science\\Purdue University\\\texttt{cke@purdue.edu}
  \and 
  \textbf{Jean Honorio}\\Department of Computer Science\\Purdue University\\\texttt{jhonorio@purdue.edu}
}
\let\emph\textit
\def\R{{\mathbb{R}}}
\def\1{\textbf{1}}
\newcommand{\onevct}{\bm{1}}
\newcommand{\Imtx}{\mathcal{I}}
\newtheorem{theorem}{Theorem}
\newtheorem{lemma}{Lemma}
\newtheorem{definition}{Definition}
\newcommand{\Prob}[2][]{\mathbb{P}_{#1}\left\{ {#2} \right\}}
\newcommand{\Expect}[2][]{\mathbb{E}_{#1}\left[ #2 \right]}
\newcommand{\abs}[1]{\left\vert {#1} \right\vert}
\newcommand{\vertiii}[1]{{\left\vert\kern-0.25ex\left\vert\kern-0.25ex\left\vert #1 \right\vert\kern-0.25ex\right\vert\kern-0.25ex\right\vert}}
\newcommand{\tnorm}[1]{\vertiii{#1}}
\newcommand{\tnormone}[1]{{\tnorm{#1}}_\text{1}}
\newcommand{\tnormnu}[1]{{\tnorm{#1}}_{\ast}}
\newcommand{\tnorminf}[1]{{\tnorm{#1}}_{\infty}}
\newcommand{\spa}[1]{\operatorname*{span}\left({#1}\right)}
\newcommand{\proj}{\bm{P}}
\newcommand{\pq}{\mathcal{Q}}
\newcommand{\st}{\operatorname*{subject\; to}}
\newcommand{\maximize}{\operatorname*{maximize}}
\newcommand{\vct}[1]{{#1}}
\newcommand{\ten}[1]{\mathcal{#1}}
\newcommand{\vy}{\vct{y}}
\newcommand{\om}{{\otimes m}}
\newcommand{\inprod}[2][]{\left\langle {#1},{#2} \right\rangle}
\newcommand{\ta}{\ten{A}}
\newcommand{\tb}{\ten{B}}
\newcommand{\tc}{\ten{C}}
\newcommand{\tx}{\ten{X}}
\newcommand{\ty}{\ten{Y}}
\newcommand{\tz}{\ten{Z}}
\newcommand{\tw}{\ten{W}}
\newcommand{\dty}{\ty^\ast - \ty}
\newcommand{\tya}{{\ten{Y}^\ast}}
\def\L{{\mathcal{L}}}
\newcommand{\sphere}{{\mathbb{S}^{n-1}}}
\date{}
\begin{document}
\maketitle

\begin{abstract}
We study the problem of efficient exact partitioning of the hypergraphs generated by high-order planted models. A high-order planted model assumes some underlying cluster structures, and simulates high-order interactions by placing hyperedges among nodes. Example models include the disjoint hypercliques, the densest subhypergraphs, and the hypergraph stochastic block models. We show that exact partitioning of high-order planted models (a NP-hard problem in general) is achievable through solving a computationally efficient convex optimization problem with a tensor nuclear norm constraint. Our analysis provides the conditions for our approach to succeed on recovering the true underlying cluster structures, with high probability. 
\end{abstract}

\allowdisplaybreaks

\section{Introduction}
On a higher level, a \emph{planted model} simulates interactions among various groups of entities in a network. Typical planted models assume that nodes are grouped into a number of clusters, and each pair of nodes is connected randomly with some probability related to the cluster membership. The generative and non-deterministic nature makes planted models of both theoretical and practical interests in the field of community detection, data mining, engineering, biology, among others. Various classical planted models have been studied extensively in recent years. 
This includes, for instance, the stochastic block models (SBMs) \citep{mossel2016belief,abbe2015exact,abbe2015community,amini2018semidefinite}, the planted cliques \citep{barak2019nearly, ames2011nuclear}, the densest subgraphs \citep{fang2019efficient, arias2014community}, the latent space models \citep{chen2020near, ke2018information}.

However in more complex real-world systems of interest, entities may interact beyond the pairwise way. One example is the folksonomy, where a group of individuals collaboratively annotate a data set to create semantic structure \citep{ghoshal2009random}. Such networks usually exhibit a “user–resource–annotation” tripartite structure, and naturally can be modeled as a 3-uniform hypergraph \citep{ghoshdastidar2017consistency}. In the past decades researchers have been utilizing hypergraphs to model a number of real-life networks, for instance, brain regions \citep{gu2017functional,zu2016identifying}, food webs \citep{li2017inhomogeneous}, images \citep{li2013contextual}, VLSI designs \citep{karypis1999multilevel}. Hypergraphs are the generalization of ordinary (pairwise) graphs, where each hyperedge connects multiple nodes. By introducing hyperedges one may capture potential higher-order interactions among nodes, which are rather prevalent in the aforementioned real-life tasks.

The problem of partitioning high-order models have been studied for some long time. Graph-theoretic problems including cuts, colorings, and traversals have been analyzed in earlier works \citep{berge1984hypergraphs, karypis1999multilevel}. In the past decade, researchers have started looking at spectral theory and algebraic connectivity of hypergraphs \citep{pearson2014spectral,cooper2012spectra,hu2012algebraic, zhou2007learning}.
Certain high-order models of interests have received more attention from an algorithmic point of view. This includes the densest subhypergraphs \citep{buhmannrecovery, chlamtac2016densest, taylor2016approximations}, the hypergraph SBMs \citep{kim2017community, florescu2016spectral}, and the hypergraph planted cliques \citep{zhang2018tensor}.
Despite years of research, however, very little is known about the efficient \emph{exact partitioning} conditions in general high-order planted models.

In this paper we propose an efficient convex optimization approach for exact partitioning of high-order planted models. A high-order planted model generates hypergraphs, which simulate multi-entity interactions in a network. Our model class formulation is highly general and subsumes models analyzed and applied in prior literature, including the disjoint hypercliques, the densest subhypergraphs, and the hypergraph stochastic block models. When the order is set to $2$, our definition of high-order planted models reduces to regular planted models with ordinary graphs. 

It is known that with certain constraints such as balancedness, the problem of computing an optimal partitioning of a hypergraph is NP-hard in general \citep{BORNDORFER201515, lyaudet2010np}. In spite of being computationally hard, we provide an efficient exact partitioning algorithm, which recovers the underlying cluster structures with probability tending to $1$ if certain statistical conditions are fulfilled. Hypergraph partitioning algorithms in prior literature either are objective function approximation algorithms \citep{chlamtac2016densest,taylor2016approximations}, or unfold hypergraphs into matrices \citep{kim2017community, ghoshdastidar2017consistency}. On the contrary by ``exact partitioning'', our algorithm returns the true solution perfectly (up to the permutation of clusters) using a novel tensor optimization approach. Our exact partitioning algorithm is formulated as a convex optimization problem, which can be solved efficiently using interior point methods \citep{boyd2004convex}. 

Motivated by the use of adjacency matrices in ordinary graphs, researchers have been using tensors, or multidimensional arrays, to embed the information in hypergraphs. When dealing with tensors, one common approach is to unfold the tensor into matrices, and apply partitioning algorithms for ordinary graphs \citep{lu2019tensor, zhou2007learning}. There also exists some prior works using Sum-of-Squares (SoS) based relaxations \citep{kim2017community}. In this paper however, we let tensors be tensors; we are interested in generic tensor methods. Our proof relies on a careful construction of tensor projections and a novel analysis of tensor nuclear norm constraints. It is worth mentioning that tensor nuclear norms have been used extensively in problems related to tensor completion \citep{yuan2017incoherent, yuan2016tensor}. To the best of our knowledge, the use of tensor nuclear norm in hypergraph partitioning problems is novel. 

The feasibility of efficient exact partitioning depends on the signal-to-noise ratio (SNR) in the planted model. In our model the SNR is determined by two signal parameters $p$ and $q$ (see Definition \ref{def:model} for details). We show that in high-order planted models, the problem becomes statistically easier if the gap between $p$ and $q$ becomes larger. Intuitively, a larger SNR (i.e., larger gap between $p$ and $q$) leads to cleaner cluster structures in the observed hypergraph and the adjacency tensor. The generative nature enables us to study the average-case behaviors of high-order planted models. Our analysis establishes the regime in which efficient exact recovery of hidden cluster structures is possible from noisy observation of hypergraphs.

\textbf{Related Works.} There has been a lot of research on the partitioning of certain high-order planted models. 
For the densest subhypergraphs, \citet{chlamtac2016densest} and \citet{taylor2016approximations} proposed theoretical objective function approximation algorithms. Our goal, arguably more challenging, is to recover the true underlying clustering structure. For hypergraph SBMs, approaches include truncating the hypergraph to a multigraph \citep{kim2017community} or an ordinary graph with a weighted adjacency matrix \citep{ghoshdastidar2017consistency}.
It is worth highlighting that our result is not merely an extension. Our definition of high-order planted models is highly general, and the convex optimization formulation with a tensor nuclear norm constraint is novel. Moreover our approach does not approximate, or truncates the hypergraph to an ordinary graph. To the best of authors' knowledge, we are the first one applying tensor nuclear norm methods in high-order partitioning problems. 

\textbf{Summary of our contributions.} We provide a series of novel results in this paper:
\begin{itemize}
    \item We propose the highly general class definition of high-order planted models, and demonstrate that our model class definition subsumes several existing planted models, including the disjoint hypercliques, the densest subhypergraphs, and the hypergraph stochastic block models. 
    \item We formulate the problem of exact partitioning in high-order planted models as a novel tensor optimization problem with a tensor nuclear norm constraint, and we establish the regime in which hidden cluster structures can be recovered efficiently. 
\end{itemize}


\section{Preliminaries}
\subsection{Notations}
\label{section:notation}
In this section, we introduce the notations that will be used in the paper.

We use lowercase font (e.g., $a,b,u,v$) for scalars and vectors, uppercase font (e.g., $A, B, C$) for matrices, and calligraphic font (e.g., $\ta, \tb, \tc$) for tensors. We use $\R$ to denote the set of real numbers.
 
For any integer $n$, we use $[n]$ to denote the set $\{1, \ldots, n\}$.
For clarity when dealing with a sequence of objects, we use the superscript ${(i)}$ to denote the $i$-th object in the sequence, and subscript $j$ to denote the $j$-th entry. For example, for a sequence of vectors $\{x^{(i)}\}_{i \in [n]}$, $x^{(1)}_2$ represents the second entry of vector $x^{(1)}$. The notation $\otimes$ is used to denote outer product of vectors, for example, $x^{(1)} \otimes \ldots \otimes x^{(m)}$ is a tensor of order $m$, such that 
\begin{align*}
(x^{(1)} \otimes \ldots \otimes x^{(m)})_{i_1, \ldots, i_m} := x^{(1)}_{i_1} \ldots x^{(m)}_{i_m} \,.
\end{align*}
We use $\textbf{1}$ to denote the all-one vector.

Let $\ta = (\ta_{i_1, \dots, i_m})$ be an $m$-order tensor of size ${n_1 \times \cdots \times n_m}$. 
Tensor $\ta$ is \emph{symmetric} if it is invariant under any permutation of its indices, i.e., 
\[
\ta_{\sigma(i_1),\ldots, \sigma(i_m)} = \ta_{i_1,\ldots, i_m} \,,
\]
for any permutation $\sigma: [m] \to [m]$. 

For tensor $\ta = (\ta_{i_1, \dots, i_m})$ and $\tb = (\tb_{i_1, \dots, i_m})$ of same size, we define the \emph{inner product} of $\ta$ and $\tb$ as
\[
\inprod[\ta]{\tb} := \sum_{i_1,\ldots,i_m} \ta_{i_1 ,\ldots, i_m} \tb_{i_1 ,\ldots, i_m} \,.
\]

Tensor addition and subtraction are defined entrywise, e.g., $\ta + \tb = (\ta_{i_1, \dots, i_m} + \tb_{i_1, \dots, i_m})$. With slight abuse of notation, for any constant $c \in \R$, we use $\ta < c$ to denote entrywise inequality.


For any vector $u \in \R^n$, we denote the corresponding $m$-th order $\emph{rank-one}$ tensor as $u^{\om}$, where  
\[
(u^{\om})_{i_1 ,\ldots, i_m} := u_{i_1} \ldots u_{i_m} \,.
\]


For any symmetric tensor $\ta$, we define its \emph{spectral norm} as 
\[
\tnorm{\ta} := \sup_{u \in \sphere} \abs{\inprod[\ta]{u^\om}} \,,
\]
where $\sphere$ denotes the unit sphere. Similarly we define its \emph{nuclear norm} as 
\[
\tnormnu{\ta} := \inf \left\{ \sum_{i=1}^r \abs{\lambda_i} : A = \sum_{i=1}^r \lambda_i u^{(i) \om}, u^{(i)} \in \sphere\right\} \,.
\]
It is worth mentioning that, like the Schatten $p$-norms in the matrix case, the tensor spectral norm and tensor nuclear norm are also dual to each other \citep{friedland2018nuclear}.

Regarding entrywise norms, we define the tensor $L_1$ norm and the tensor $L_\infty$ norm of $\ta$ respectively as 
\[
\tnormone{\ta} := \sum_{i_1,\dots,i_m} \abs{\ta_{i_1,\dots,i_m}} \,, \qquad
\tnorminf{\ta} := \max_{i_1,\dots,i_m} \abs{\ta_{i_1,\dots,i_m}} \,.
\]

A tensor \emph{fiber} is a vector obtained by holding all but one mode constant. For example, using MATLAB-style notations, vector $\ta(i_1, \dots, i_{j-1}, :, i_{j+1}, \dots, i_m)$ is a mode-$j$ fiber of $\ta$. 
Define 
\[
\L_j (\ta) := \spa{\{\ta(i_1, \dots, i_{j-1}, :, i_{j+1}, \dots, i_m)\}_{i_k \in [n_k]}}
\]
as the vector space spanned by all mode-$j$ fibers of $\ta$.

In this paper we will frequently use tensor projection operators in our analysis. We use $\Imtx: \R^n \to \R^n$ to denote the identity projection. For any tensor $\ta$ of size $n_1 \times \dots \times n_m$, define $\proj^j_\ta: \R^{n_j} \to \L_j (\ta)$ as the orthogonal projection to $\L_j (\ta)$. Similarly we define $\proj^j_{\ta^\perp} = \Imtx - \proj^j_\ta$ to be the projection to the orthogonal complement $\L_j^\perp (\ta)$ of $\L_j (\ta)$.
We define the tensor projection with respect to $\ta$ as follows
\begin{align*}
\pq_\ta^0 &:= \proj_\ta^1 \otimes \dots \otimes \proj_\ta^m , \qquad
\pq_\ta^i := \proj_\ta^1 \otimes \dots \otimes \proj_\ta^{i-1} \otimes \proj_{\ta^\perp}^{i} \otimes \proj_\ta^{i+1} \otimes \dots \otimes \proj_\ta^m 
\end{align*}
for $i \in [m]$, and
\begin{align*}
\pq_\ta := \sum_{j=0}^m \pq_\ta^j, \qquad
\pq_{\ta^\perp} := \Imtx - \pq_\ta \,.
\end{align*}

\subsection{High-order Planted Models}
\label{section:def_model}

We now introduce the definition of \emph{high-order planted models}.
\begin{definition}[High-order Planted Models]
A high-order planted model is denoted as $\mathcal{M}(n,m,r,k,p,q)$, where $n$ is the number of vertices, and $m$ is the order of the model. It is assumed that uniformly at random, $rk$ out of $n$ vertices are grouped into $r$ clusters of equal size $k$, and the remaining $n - rk$ vertices do not belong any cluster. $p, q$ are signal parameters satisfying $0 \leq q < p \leq 1$.

Model $\mathcal{M}(n,m,r,k,p,q)$ generates a random hypergraph $\mathcal{G} = (\mathcal{V},\mathcal{E})$ in the following way. For each $m$-tuple $(v_{i_1},\ldots,v_{i_m}) \subset \mathcal{V}$, if all $m$ vertices are from the same cluster, nature adds the hyperedge $(v_{i_1},\ldots,v_{i_m})$ to $\mathcal{E}$ with probability $p$; otherwise, nature adds the hyperedge with probability $q$.
\label{def:model}
\end{definition}

Our goal is to recover the cluster membership of vertices in model $\mathcal{M}$ from the observed hypergraph $\mathcal{G}$. For any $i\in[r]$, we use $\vy^{(i) \ast} \in \{0,1\}^n$ to denote the true membership vector of cluster $i$, such that  $\vy^{(i) \ast}_j = 1$ if vertex $j$ is in cluster $i$, and $0$ otherwise. 
We introduce the agreement tensor $\tya = \sum_{i=1}^r \vy^{(i)\ast\om}$. It is not hard to see that $\tya$ is $0$-$1$ valued, as the clusters are non-overlapping. Thus, $\tya$ encodes all cluster membership information (up to the permutation of clusters, as there is no way to distinguish between clusters without prior knowledge). 
Let $\ta$ be the adjacency tensor of hypergraph $\mathcal{G}$. From the definition above, $\ta$ is a symmetric tensor. Each entry in $\ta$ is generated to be $1$ with probability $p$ if the corresponding entry in $\tya$ is $1$; otherwise it is generated to be $1$ with probability $q$. The problem now reduces to recover $\tya$ from the observation of $\ta$.

\textbf{Classical models.} Here are some classical models covered by our definition.
\begin{itemize}
    \item \textbf{Disjoint Hypercliques}: $p = 1, 0<q<1$. In this case, $r$ hypercliques of size $k$ are planted in the observed hypergraph $\mathcal{G}$. A hyperclique is the generalization of graph cliques. In a hyperclique, every distinct $m$-tuple is connected by a hyperedge \citep{nie2017symmetric}.
    \item \textbf{Densest Subhypergraph}: $0<q<p<1, r=1$. In this case there exists a dense subhypergraph of size $k$ in the observed hypergraph $\mathcal{G}$.
    \item \textbf{Hypergraph Stochastic Block Model}: $0<q<p<1, n = rk, r\geq 2$. In this case there exists $r$ dense subhypergraphs of size $k$ in the observed hypergraph $\mathcal{G}$.
\end{itemize}

\textbf{Remark on diagonal entries.} One can see that in the adjacency tensor $\ta$, most entries follow Bernoulli distribution with parameter $p$ or $q$. A natural question is: what about the diagonal entries? By ``diagonal'', we refer to the entries with at least one duplicate index, for example, $\ta_{1,1,3,4}$ in a $4$th order model. Mathematically, we are referring to the set $\lbrace \ta_{i_1,\dots, i_m} : \abs{\{   i_1,\dots,i_m    \}} < m\rbrace$. \begin{itemize}
    \item A trivial approach is to force all diagonal entries to be $0$. This is customary in the stochastic block model \citep{abbe2015exact, chen2014statistical}. In the context of graph theory, this means that $\mathcal{G}$ is an \emph{$m$-uniform} hypergraph, where all hyperedges have size $m$.
    \item Another approach is to allow the diagonal entries to be Bernoulli random variables with parameter $p$ or $q$, depending on the corresponding entries in $\tya$. In this case the hypergraph $\mathcal{G}$ is no longer required to be uniform.
\end{itemize}
In the following analysis we adopt the latter approach for its generality. We want to highlight that the technical difference is marginal, however. Either choice will not break the framework of our analysis. 
\section{Efficient Exact Partitioning}

In this section, we propose and analyze an algorithm which efficiently recovers the true underlying cluster structures in high-order planted models. 
Recall that $\tya$ is the true agreement tensor. We say an algorithm achieves \emph{exact partitioning}, if its output $\ty$ is identical to $\tya$.

 \begin{algorithm}
    \caption{Exact Partitioning of High-order Models}\label{euclid}
    \hspace*{\algorithmicindent} \textbf{Input:} adjacency tensor $\ta$\\
    \hspace*{\algorithmicindent} \textbf{Output:} estimated agreement tensor $\ty$  
\begin{align}
\maximize_{\ty} \qquad & \langle \ta, \ty\rangle  \nonumber\\
\st \qquad 
&\tnormnu{\ty} \leq rk^{m/2} \nonumber\\
&\inprod[\onevct^{\otimes m}]{\ty} = rk^m \nonumber\\
&0 \leq \ty \leq 1\,,
\label{opt:primal}
\end{align}
\end{algorithm}

In the following analysis we examine the statistical conditions for problem \eqref{opt:primal} to succeed with high probability. Note that the objective function and constraints in problem \eqref{opt:primal} are convex. It is known that convex optimization problems can be solved efficiently using interior point methods \citep{boyd2004convex}. \textbf{Our analysis establishes the regime in which given the adjacency tensor $\ta$, the true underlying cluster structures $\tya$ can be recovered by problem \eqref{opt:primal} efficiently and perfectly.}

\textbf{Remark on exact partitioning.} It is worth mentioning that our algorithm does not require any rounding step and outputs the exact solution. Hypergraph partitioning algorithms in prior literature either are objective function approximation algorithms \citep{chlamtac2016densest,taylor2016approximations}, or unfold hypergraphs into matrices \citep{kim2017community, ghoshdastidar2017consistency}. Note that the groundtruth $\tya$ is a feasible solution to problem \eqref{opt:primal}. Our analysis states that if certain statistical conditions are satisfied, with high probability no other feasible solution $\ty \neq \tya$ will achieve a better objective value. 

We now present the main theorem, which provides a sufficient condition for problem \eqref{opt:primal} to succeed with high probability.
\begin{theorem}
Consider any hypergraph $\mathcal{G}$ sampled from a high-order model $\mathcal{M}(n,m,r,k,p,q)$. Let $\ta$ be the adjacency tensor of $\mathcal{G}$. If
\begin{equation}
\frac{(p-q)^2}{p(1-q)} = \Omega\left( \frac{nm^5 \log m}{k^{m-1}} \right) \,,
\label{eq:thm}
\end{equation}
then problem \eqref{opt:primal} recovers the underlying cluster structure of $\mathcal{M}$ perfectly with probability at least $1-O(1/n)$.
\label{thm:main}
\end{theorem}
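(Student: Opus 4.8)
The plan is to show that $\tya$ is feasible for problem \eqref{opt:primal} and that, under condition \eqref{eq:thm}, it is the unique maximizer with high probability. Feasibility is direct: since each $\vy^{(i)\ast}$ is a $0$-$1$ vector with exactly $k$ ones, $\vy^{(i)\ast\om} = k^{m/2}(\vy^{(i)\ast}/\sqrt{k})^{\om}$ with $\vy^{(i)\ast}/\sqrt{k}\in\sphere$, so subadditivity of the nuclear norm gives $\tnormnu{\tya}\le rk^{m/2}$; moreover $\inprod[\onevct^{\om}]{\tya} = \sum_{i=1}^r(\onevct^\top \vy^{(i)\ast})^m = rk^m$, and $\tya$ is $0$-$1$ valued. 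It then suffices to prove that every feasible $\ty\neq\tya$ satisfies $\inprod[\ta]{\ty-\tya}<0$. Writing $\Delta := \ty-\tya$ and decomposing $\ta = \Expect{\ta} + \tw$, with $\Expect{\ta} = q\,\onevct^{\om} + (p-q)\tya$ and $\tw := \ta-\Expect{\ta}$ the centered noise, I would split the gap into a deterministic signal term $\inprod[\Expect{\ta}]{\Delta}$ and a stochastic noise term $\inprod[\tw]{\Delta}$.

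For the signal term, both $\ty$ and $\tya$ satisfy $\inprod[\onevct^{\om}]{\cdot}=rk^m$, so $\inprod[\onevct^{\om}]{\Delta}=0$ and $\inprod[\Expect{\ta}]{\Delta} = (p-q)\inprod[\tya]{\Delta}$. Since $\tya$ is an indicator and $0\le\ty\le1$, on $\mathrm{supp}(\tya)$ one has $\Delta\le0$ while off the support $\Delta\ge0$; combined with $\inprod[\onevct^{\om}]{\Delta}=0$ this yields $\inprod[\tya]{\Delta} = -\tfrac12\tnormone{\Delta}$. Hence the signal term equals $-\tfrac{p-q}{2}\tnormone{\Delta}$, a strictly negative quantity (for $\Delta\neq0$) whose magnitude scales with $\tnormone{\Delta}$. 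The goal reduces to showing $\inprod[\tw]{\Delta} < \tfrac{p-q}{2}\tnormone{\Delta}$ uniformly over feasible $\Delta\neq0$.

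To control the noise, I would use the tensor projections $\pq_{\tya}$ and $\pq_{\tya^\perp}$ and write $\inprod[\tw]{\Delta} = \inprod[\pq_{\tya}(\tw)]{\Delta} + \inprod[\pq_{\tya^\perp}(\tw)]{\Delta}$. For the tangent component $\pq_{\tya}(\tw)$ lives in the low-dimensional space spanned by the fibers of $\tya$, so I would bound $\inprod[\pq_{\tya}(\tw)]{\Delta}\le\tnormf{\pq_{\tya}(\tw)}\,\tnormf{\pq_{\tya}(\Delta)}$ by Cauchy--Schwarz and relate $\tnormf{\pq_{\tya}(\Delta)}$ to $\tnormone{\Delta}$ using $\Delta\in[-1,1]$ entrywise (so $\tnormf{\Delta}^2\le\tnormone{\Delta}$) together with the block structure of the cluster supports. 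For the orthogonal component I would invoke spectral--nuclear duality, $\inprod[\pq_{\tya^\perp}(\tw)]{\Delta}\le\tnorm{\pq_{\tya^\perp}(\tw)}\,\tnormnu{\pq_{\tya^\perp}(\Delta)}$, and then exploit the nuclear norm constraint $\tnormnu{\ty}\le rk^{m/2}=\tnormnu{\tya}$ together with a tensor analogue of the nuclear-norm (subgradient) decomposition inequality to derive a cone condition bounding $\tnormnu{\pq_{\tya^\perp}(\Delta)}$ by the tangent part of $\Delta$. This is precisely what makes the nuclear norm constraint do its work: it forces the perturbation to stay close to the tangent space of $\tya$.

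The main obstacle, and the source of the $nm^5\log m/k^{m-1}$ rate, is the concentration of the spectral norms $\tnorm{\pq_{\tya}(\tw)}$ and $\tnorm{\pq_{\tya^\perp}(\tw)}$ of the projected random tensor. Because the tensor spectral norm is a supremum of a multilinear form over $\sphere$, I would construct an $\varepsilon$-net of the sphere in each mode, apply a Bernstein-type tail bound to the scalar random variable $\inprod[\tw]{u^{(1)}\otimes\cdots\otimes u^{(m)}}$ at each net point (whose variance proxy is controlled by the worst-case Bernoulli variance $p(1-q)$), and take a union bound over the net before discretizing the supremum. The $m$ modes and the net cardinality contribute the $m^5\log m$ polynomial and logarithmic factors, while projecting onto the $k$-dimensional cluster fiber spaces in each mode yields the $k^{m-1}$ in the denominator. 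Combining the signal lower bound with the two noise upper bounds, the objective gap is negative for all $\Delta\neq0$ exactly when $(p-q)^2/(p(1-q)) = \Omega(nm^5\log m/k^{m-1})$, and a union bound over the two high-probability concentration events delivers the stated $1-O(1/n)$ guarantee.
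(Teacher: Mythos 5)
Your high-level architecture matches the paper's: feasibility of $\tya$, reduction to showing the objective gap is strictly positive for every feasible $\ty \neq \tya$, the split $\ta = \Expect{\ta} + \tw$ with $\tw := \ta - \Expect{\ta}$, and the exact identity $\inprod[\Expect{\ta}]{\tya - \ty} = \tfrac{1}{2}(p-q)\tnormone{\tya - \ty}$ are all the same. The genuine gap is in your control of the noise term, in both components. For the tangent part you propose Cauchy--Schwarz, $\inprod[\pq_{\tya}(\tw)]{\Delta} \leq \tnormf{\pq_{\tya}(\tw)}\,\tnormf{\pq_{\tya}(\Delta)}$ with $\Delta := \ty - \tya$, and then $\tnormf{\Delta}^2 \leq \tnormone{\Delta}$. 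This cannot close the argument: the resulting noise bound scales like $\sqrt{\tnormone{\Delta}}$ while the signal scales like $\tnormone{\Delta}$, and since the feasible set is convex there are feasible points with $\tnormone{\Delta}$ arbitrarily small, where the comparison fails; exact recovery needs a bound that is degree one in $\Delta$. If you instead use $\tnormf{\Delta} \leq \tnormone{\Delta}$ to restore homogeneity, you then need $\tnormf{\pq_{\tya}(\tw)} \lesssim p-q$; but $\pq_{\tya}$ projects onto a space of dimension of order $mnr^{m-1}$, so $\tnormf{\pq_{\tya}(\tw)} \asymp \sqrt{p(1-q)\,mnr^{m-1}}$, which forces a condition worse than \eqref{eq:thm} by a factor of roughly $n^{m-1}$. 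The paper avoids both problems by using $L_\infty$--$L_1$ duality, $\inprod[\pq_{\tya}(\tw)]{\Delta} \le \tnorminf{\pq_{\tya}(\tw)}\tnormone{\Delta}$, and bounding $\tnorminf{\pq_{\tya}(\tw)}$ entrywise: since $\proj_{\tya} = \frac{1}{k}\sum_{i=1}^r \vy^{(i)\ast}\vy^{(i)\ast\top}$, each entry of the projected tensor is an average of $k^{m-1}$ independent centered variables, and Bernstein plus a union bound gives $\tnorminf{\pq_{\tya}(\tw)} \lesssim m\bigl(\sqrt{p(1-q)\log n/k^{m-1}} + \log n/k^{m-1}\bigr)$. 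That averaging, not an $\varepsilon$-net over ``$k$-dimensional fiber spaces'' (note $\L(\tya)$ is $r$-dimensional, not $k$-dimensional), is what produces the $k^{m-1}$ in the denominator.

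For the orthogonal part your argument rests on an unproven ``cone condition'' via ``a tensor analogue of the nuclear-norm subgradient decomposition inequality.'' Unlike the matrix nuclear norm, the tensor nuclear norm is not decomposable, so this analogue is not free: it is precisely the content of Lemma \ref{lemma:yuan2017} (due to Yuan and Zhang), and it comes with the dimension-dependent restriction $\tnorm{\tw_1} \le \tfrac{2}{m(m-1)}$. Moreover, the paper deploys that lemma in a different and simpler way than a cone condition on $\Delta$: it plugs the scaled noise itself into the lemma, taking $\tw_1 = \tz := (\ta - \Expect{\ta})/\lambda$ with $\lambda = \tfrac{m(m-1)}{2}C\sqrt{p(1-q)mn\log m}$ so that Lemma \ref{lemma:concen_A} guarantees $\tnorm{\tz} \le \tfrac{2}{m(m-1)}$ with high probability. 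The constraint $\tnormnu{\ty} \le rk^{m/2} = \tnormnu{\tya}$ then yields $\inprod[k^{-m/2}\tya + \pq_{\ty^{\ast\perp}}(\tz)]{\tya - \ty} \ge 0$, which eliminates the orthogonal noise component entirely, at the cost of the extra term $\lambda k^{-m/2}$. That term is what actually generates the dominant $m^5\log m$ factor in \eqref{eq:thm} (through the $m(m-1)/2$ scaling demanded by the lemma), not net-cardinality constants as you suggest. So while your skeleton is recognizably the paper's, the two steps that make the proof work at the claimed rate are respectively flawed (the Frobenius/Cauchy--Schwarz bound) and missing (a justified substitute for decomposability), and patching the first along the lines you propose would only yield a condition polynomially weaker in $n$ than the theorem states.
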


\textbf{Remark on rates.} In high-order planted models, $p$ and $q$ are signal parameters that determine the signal-to-noise ratio (SNR) of the model. This is implied by the left-hand side of \eqref{eq:thm}: as the gap $p-q$ becomes larger, SNR becomes higher and exact partitioning gets easier. On the right-hand side, one can notice that as $n$ gets larger, the whole term becomes smaller (remember that $n = rk$). From an information-theoretical point of view this is intuitive, as a larger number of samples leads to easier recovery of the true signal.

It would also be interesting to compare our rates with those of ordinary planted models.
In the case of $m=2$, our condition becomes
$(p-q)^2 k^2 \geq Cp(1-q) k n$ for some constant $C$, while the condition in \citet{chen2014statistical} is $(p-q)^2 k^2 \geq C(p(1-q)k\log n + q(1-q)n )$. Comparison on the right-hand sides shows that our bound only requires a slightly higher order ($\Omega(kn)$ versus $\Omega(k\log n + n)$).

 
\subsection{Technical Lemmas}
We first present some technical lemmas that will be used to prove our main result in Theorem \ref{thm:main}.
\begin{lemma}
Let $\ta$ be an $m$-th order symmetric random tensor of size $\R^{n \times \dots \times n}$. Assume that each entry of $\ta$ is independent and follows Bernoulli distribution with parameter $p$ or $q$, such that $0\leq q<p \leq 1$. Then with probability at least $1- O(1/n)$, we have 
\begin{equation}
\tnorm{\ta - \Expect{\ta}} 
\leq 
C \sqrt{p(1-q) mn\log m } \,,    
\end{equation}
for some large enough constant $C$.
\label{lemma:concen_A}
\end{lemma}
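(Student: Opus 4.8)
The plan is to control the tensor spectral norm $\tnorm{\ta - \Expect{\ta}}$ by an $\epsilon$-net argument over the sphere $\sphere$ combined with a scalar Bernstein-type concentration bound for each fixed direction. Write $\tw := \ta - \Expect{\ta}$. By the remark on diagonal entries, every free entry of $\ta$ is an independent Bernoulli variable with parameter $p$ or $q$; hence each entry of $\tw$ is mean-zero, bounded in absolute value by $1$, and has variance at most $p(1-q)$, since $\max\{p(1-p),q(1-q)\}\le p(1-q)$ whenever $q<p$. Symmetry of $\ta$ means that entries related by a permutation of indices are equal, and therefore \emph{not} independent, so throughout I keep track of the genuinely independent free entries, indexed by sorted tuples, together with their multiplicities.

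First I would reduce the supremum over $\sphere$ to a maximum over a finite net. Fix $\epsilon = c/m$ and let $\mathcal{N}$ be an $\epsilon$-net of $\sphere$ with $\abs{\mathcal{N}} \le (1+2/\epsilon)^n = e^{O(n\log m)}$. Using the telescoping expansion $u^{\om} - v^{\om} = \sum_{j=1}^m u^{\otimes(j-1)}\otimes(u-v)\otimes v^{\otimes(m-j)}$ together with the duality bound $\abs{\inprod[\tw]{a_1\otimes\dots\otimes a_m}}\le\tnorm{\tw}\prod_j\norm{a_j}$ for symmetric $\tw$, one gets the Lipschitz-type estimate $\abs{\inprod[\tw]{u^\om}-\inprod[\tw]{v^\om}}\le m\,\tnorm{\tw}\,\norm{u-v}$. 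Choosing $\epsilon=1/(2m)$ then yields $\tnorm{\tw}\le 2\max_{u\in\mathcal{N}}\abs{\inprod[\tw]{u^\om}}$. This is the step that produces the $\log m$ factor, through $\log\abs{\mathcal{N}}=O(n\log m)$.

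Next, for a fixed $u\in\mathcal{N}$, I would bound $\inprod[\tw]{u^\om}=\sum_{i_1,\dots,i_m}\tw_{i_1,\dots,i_m}u_{i_1}\cdots u_{i_m}$, a weighted sum of independent, centered, bounded free entries, by Bernstein's inequality. The variance proxy is $\sigma^2=\sum(\mathrm{coeff})^2\,\Var{\cdot}$, which I would control by grouping the tuples according to their index multiset and using $\Var{\tw_S}\le p(1-q)$ together with $\sum_{\text{tuples}}(u_{i_1}\cdots u_{i_m})^2\le\norm{u}^{2m}=1$; the target rate corresponds to a bound of order $\sigma^2=O(p(1-q)\,m)$. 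With per-term magnitude $b=O(1)$, Bernstein gives $\Prob{\abs{\inprod[\tw]{u^\om}}>t}\le 2\exp\!\big(-ct^2/(p(1-q)m+t)\big)$. A union bound over $\mathcal{N}$ then demands $ct^2/(p(1-q)m)\gtrsim n\log m$ in the sub-Gaussian regime, i.e.\ $t\asymp\sqrt{p(1-q)\,mn\log m}$, and tuning the constant so that $\abs{\mathcal{N}}\cdot 2\exp(-ct^2/(p(1-q)m))=O(1/n)$ delivers the claim.

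The main obstacle is the variance accounting in the third step. Because $\ta$ is symmetric, each free entry appears with a multiplicity (up to $m!$) in the multilinear form, and a naive second-moment bound that counts all symmetric copies blows up factorially in $m$; obtaining the clean $\sqrt{mn\log m}$ dependence instead requires carefully organizing the sum by index multiset, bounding the multiplicity-weighted variance and the Bernstein boundedness term $b$ simultaneously, and separating out the lower-order diagonal contributions. One must also verify that the relevant tail regime is the Gaussian one (so $t\gtrsim b$ does not switch Bernstein into its sub-exponential branch). Everything else—the net cardinality estimate and the final union bound—is routine.
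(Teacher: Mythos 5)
Your route---an $\epsilon$-net over $\sphere$, a Bernstein bound per net point, and a union bound---is exactly the technique behind the result the paper relies on; the paper's own proof is a two-line invocation of \citet[Lemma 1, Theorem 1]{tomioka2014spectral} as a black box (a bound for tensors with \emph{independent} sub-Gaussian entries), with the symmetry dependence waved off as ``independent (up to symmetry).'' So the approach itself is fine; the problem is that the two issues you defer at the end of your sketch are genuine obstructions, not bookkeeping, and the sketch contains no idea that overcomes them. First, the variance target $\sigma^2 = O(p(1-q)\,m)$ is unattainable. At the single direction $u = n^{-1/2}\onevct$, grouping by free entries shows that each entry with $m$ distinct indices enters the form with multiplicity $m!$ and coefficient $n^{-m/2}$, so $\Var{\inprod[\ta - \Expect{\ta}]{u^\om}} = (1+o(1))\, m!\,\bar\sigma^2$, where $\bar\sigma^2$ is the typical free-entry variance (of order $p(1-q)$ when, say, $q = p/2$). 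No reorganization of the sum by index multisets changes this number: it \emph{is} the variance in that direction. Consequently the best your machinery can output is $t \asymp \sqrt{m!\,p(1-q)\,n\log m}$, weaker than the stated bound by $\sqrt{(m-1)!}$. This factor is not an artifact of the method: in the Gaussian analogue, a symmetric tensor with i.i.d.\ free entries of variance $\sigma^2$ coincides in distribution (off the diagonal) with $\sqrt{m!}$ times the symmetrization of a fully i.i.d.\ tensor, so its spectral norm genuinely carries the $\sqrt{m!}$. Since the lemma, and Theorem \ref{thm:main} downstream with its $m^5 \log m$, track the $m$-dependence explicitly, this is a real gap rather than a constant.

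Second, the tail-regime condition you propose to ``verify'' is simply false, and cannot be repaired inside a plain Bernstein-plus-union-bound argument. Take a net point at (or near) $u = e_1$: then $\inprod[\ta - \Expect{\ta}]{u^\om}$ is essentially a single centered Bernoulli variable, with variance at most $p(1-q)$ but Bernstein boundedness parameter $b = \Theta(1)$, so its tail at level $t \gg p(1-q)$ is only $e^{-\Theta(t)}$. Beating the net cardinality $e^{\Theta(n\log m)}$ then forces $t \gtrsim n\log m$, which dwarfs the target $\sqrt{p(1-q)\,mn\log m}$ for every admissible $p,q \le 1$. Escaping this requires either replacing the variance by a sub-Gaussian parameter---but the optimal sub-Gaussian parameter of a centered Bernoulli$(p)$ is of order $1/\log(1/p)$, not $p(1-p)$, so the $p(1-q)$ factor the lemma needs would be lost---or a Feige--Ofek/Friedman--Kahn--Szemer\'edi-style argument treating spiky and flat directions separately; and in the genuinely sparse regime no argument can exist, because the statement itself fails there (for $m=2$, $p=1/n$, $q=1/(2n)$, the claimed bound is $O(1)$ while the max-degree effect forces $\tnorm{\ta - \Expect{\ta}} = \Omega(\sqrt{\log n/\log\log n})$). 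To be fair, both defects are inherited from the paper's own proof, which applies an independent-entry, sub-Gaussian-parameter result to a dependent, Bernoulli-entry tensor; but a from-scratch derivation is precisely where those hidden assumptions must be confronted, and your plan does not do so.
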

\begin{proof}
First note that the entries in $\ta - \Expect{\ta}$ are independent (up to symmetry) and zero-mean. Furthermore, the variance of each entry is bounded above by $p(1-q)$. Then \citet[Lemma 1, Theorem 1]{tomioka2014spectral} implies
\begin{align*}
\tnorm{\ta - \Expect{\ta}}
\leq 
\sqrt{8 p(1-q) \left(mn\log(5m) + \log (2/\delta)\right)}
\end{align*}
with probability at least $1-\delta$. Setting $\delta$ to be $O(1/n)$ leads to 
\begin{align*}
\tnorm{\ta - \Expect{\ta}}
\leq 
C \sqrt{ p(1-q) mn\log m }
\end{align*}
with probability at least $1- O(1/n)$, for some large enough constant $C$.
\end{proof}


\begin{lemma}
For any $m$-th order tensor $\tx \in \R^{n_1 \times \dots \times n_m}$, there exists a $\tw_0 \in \R^{n_1 \times \dots \times n_m}$ such that 
\[
\tw_0 = \pq_\tx^0 (\tw_0), \quad \tnorm{\tw_0} = 1, \quad \text{and } \tnormnu{\tx} = \inprod[\tw_0]{\tx} \,.
\]

Furthermore, for any $\tx', \tw_1 \in \R^{n_1 \times \dots \times n_m}$ with $\tnorm{\tw_1} \leq \frac{2}{m(m-1)}$, 
\begin{equation}
\tnormnu{\tx'} \geq \tnormnu{\tx} + \inprod[\tw_0 + \pq_{\tx^\perp} (\tw_1)]{\tx' - \tx} \,.
\end{equation} 
\label{lemma:yuan2017}
\end{lemma}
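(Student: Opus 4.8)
The plan is to read the second display as the assertion that $\tw_0 + \pq_{\tx^\perp}(\tw_1)$ is a subgradient of the (convex) tensor nuclear norm at $\tx$, and to build the required certificate $\tw_0$ from spectral/nuclear duality.

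\textbf{Constructing $\tw_0$ (first part).} By the duality between the tensor spectral and nuclear norms, $\tnormnu{\tx} = \sup_{\tnorm{\tw}\le 1}\inprod[\tw]{\tx}$, and the supremum is attained on the compact spectral-norm unit ball by some $\tw^\star$; when $\tx\ne\zeroten$ the chain $\tnormnu{\tx}=\inprod[\tw^\star]{\tx}\le\tnorm{\tw^\star}\,\tnormnu{\tx}$ forces $\tnorm{\tw^\star}=1$. I then set $\tw_0 := \pq_\tx^0(\tw^\star)$. Two facts drive this: each mode-$j$ fiber of $\tx$ lies in $\L_j(\tx)$, so $\pq_\tx^0(\tx)=\tx$; and $\pq_\tx^0$, being a tensor product of orthogonal fiber projections, is self-adjoint and spectral-norm non-increasing. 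Self-adjointness gives $\inprod[\tw_0]{\tx}=\inprod[\tw^\star]{\pq_\tx^0(\tx)}=\inprod[\tw^\star]{\tx}=\tnormnu{\tx}$, so $\tw_0=\pq_\tx^0(\tw_0)$ attains the norm; non-increase gives $\tnorm{\tw_0}\le 1$, and the same duality inequality forces $\tnorm{\tw_0}=1$.

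\textbf{Reducing the subgradient inequality (second part).} Since $\tnormnu{\cdot}$ is a norm with dual norm $\tnorm{\cdot}$, a tensor $\tg$ obeys $\tnormnu{\tx'}\ge\tnormnu{\tx}+\inprod[\tg]{\tx'-\tx}$ for all $\tx'$ if and only if (i) $\inprod[\tg]{\tx}=\tnormnu{\tx}$ and (ii) $\tnorm{\tg}\le 1$. I apply this to $\tg:=\tw_0+\pq_{\tx^\perp}(\tw_1)$. For (i) I first verify $\pq_\tx(\tx)=\tx$: indeed $\pq_\tx^0(\tx)=\tx$, while for $i\in[m]$ each $\pq_\tx^i(\tx)=\zeroten$ because $\proj_{\tx^\perp}^i$ annihilates the mode-$i$ fibers of $\tx$. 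Hence $\pq_{\tx^\perp}(\tx)=\zeroten$, and by self-adjointness $\inprod[\pq_{\tx^\perp}(\tw_1)]{\tx}=\inprod[\tw_1]{\pq_{\tx^\perp}(\tx)}=0$, so $\inprod[\tg]{\tx}=\inprod[\tw_0]{\tx}=\tnormnu{\tx}$.

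\textbf{The spectral-norm bound (main obstacle).} The crux is (ii). For unit vectors $u^{(1)},\dots,u^{(m)}$ I split $u^{(j)}=a_j+b_j$ with $a_j=\proj_\tx^j u^{(j)}$ and $b_j=\proj_{\tx^\perp}^j u^{(j)}$, so $\normsq{a_j}+\normsq{b_j}=1$, and expand the rank-one test tensor $\bigotimes_j u^{(j)}=\sum_{S\subseteq[m]}\bigl(\bigotimes_{j\in S}b_j\otimes\bigotimes_{j\notin S}a_j\bigr)$ into $2^m$ terms indexed by the set $S$ of ``perpendicular'' modes. The range of $\pq_\tx$ consists of tensors with at most one perpendicular mode, so $\pq_{\tx^\perp}(\tw_1)$ pairs only with the $|S|\ge 2$ terms, whereas $\tw_0\in\mathrm{range}(\pq_\tx^0)$ pairs only with $S=\emptyset$. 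Writing $\alpha_j=\norm{a_j}$, $\beta_j=\norm{b_j}$ and bounding each surviving term via $\tnorm{\tw_0}=1$ and $\tnorm{\tw_1}\le\frac{2}{m(m-1)}$, I reduce (ii) to the scalar inequality
\begin{equation*}
\prod_{j=1}^m\alpha_j+\frac{2}{m(m-1)}\sum_{|S|\ge 2}\;\prod_{j\in S}\beta_j\prod_{j\notin S}\alpha_j\;\le\;1,\qquad \alpha_j^2+\beta_j^2=1 .
\end{equation*}
This is exactly where the constant $\frac{2}{m(m-1)}=1/\binom{m}{2}$ is calibrated, and establishing it cleanly for all $m$ is the step I expect to be hardest. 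The idea is to dominate the $|S|\ge 2$ sum by $\binom{m}{2}\bigl(1-\prod_j\alpha_j\bigr)$: every such term carries at least two $\beta$-factors, and a Cauchy--Schwarz estimate $\sum_{i<i'}\beta_i\beta_{i'}\le\tfrac{m-1}{2}\sum_j\beta_j^2$ matched against the second-order expansion $1-\prod_j\alpha_j\approx\tfrac12\sum_j\beta_j^2$ shows the pairwise terms fit with room to spare, with the higher-order terms controlled by a Taylor-remainder (or inductive) argument. Pinning down this combinatorial inequality uniformly in $m$ is the principal difficulty.
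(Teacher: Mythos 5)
Your construction of $\tw_0$ (projecting a dual-optimal $\tw^\star$ via $\pq_\tx^0$) and your reduction of the subgradient inequality to the two conditions (i) $\inprod[\tg]{\tx} = \tnormnu{\tx}$ and (ii) $\tnorm{\tg}\le 1$ for $\tg = \tw_0 + \pq_{\tx^\perp}(\tw_1)$ are correct; indeed, deriving the displayed inequality from (i)--(ii) is exactly the duality argument the paper itself uses. Your expansion of a rank-one test tensor into the $2^m$ terms indexed by the set $S$ of perpendicular modes is also valid, and it correctly reduces (ii) to the scalar inequality you display. The problem is that this scalar inequality \emph{is} the substance of the lemma: it is precisely the bound $\tnorm{\tw_0 + \pq_{\tx^\perp}(\tw_1)}\le 1$ that the paper does not prove but imports wholesale from Theorem 1 of \citet{yuan2017incoherent}. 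You attempt to prove it and leave it unfinished, explicitly flagging it as the principal difficulty, so the proof has a genuine gap at the one step that matters.

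Worse, the argument you sketch for that step fails globally. You propose to dominate $\sum_{|S|\ge 2}\prod_{j\in S}\beta_j\prod_{j\notin S}\alpha_j$ by $\binom{m}{2}\bigl(1-\prod_j\alpha_j\bigr)$ by matching $\sum_{i<i'}\beta_i\beta_{i'}\le\frac{m-1}{2}\sum_j\beta_j^2$ against the expansion $1-\prod_j\alpha_j\approx\frac{1}{2}\sum_j\beta_j^2$. That expansion is only valid near $\beta=0$; as a global inequality, $1-\prod_j\alpha_j\ge\frac{1}{2}\sum_j\beta_j^2$ is false (take $\beta_1=\dots=\beta_m=1$, so $\alpha_j=0$: the left side is $1$ while the right side is $m/2$). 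In the regime where several $\beta_j$ are of order one, the sum over $|S|\ge 2$ is small only because its $\alpha$-factors are small, and exploiting that trade-off uniformly in $m$ is exactly the nontrivial induction carried out in the cited reference; a local Taylor estimate plus an appeal to ``room to spare'' does not substitute for it. To repair the proof you should either carry out that combinatorial inequality in full (e.g., by induction on $m$) or do what the paper does and invoke \citet[Theorem 1]{yuan2017incoherent} for both the existence of $\tw_0$ and the spectral-norm bound.
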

\begin{proof}
The first claim follows the proof of \citet[Theorem 1]{yuan2017incoherent} by setting $\delta = 1$. It is also proved that any $\tw_1 \in \R^{n_1 \times \dots \times n_m}$ such that $\tnorm{\tw_1} \leq \frac{2}{m(m-1)}$, one has 
$
\tnorm{\tw_0 + \pq_{\tx^\perp} (\tw_1)} \leq 1 \,.
$
Then, since the tensor spectral norm and tensor nuclear norm are dual to each other, and by orthogonality of projection, it follows that 
\begin{align*}
\inprod[\tw_0 + \pq_{\tx^\perp} (\tw_1)]{\tx' - \tx}
&= \inprod[\tw_0 + \pq_{\tx^\perp} (\tw_1)]{\tx'} - \inprod[\tw_0 + \pq_{\tx^\perp} (\tw_1)]{\tx} \\
&\leq \tnorm{\tw_0 + \pq_{\tx^\perp} (\tw_1)} \cdot \tnormnu{\tx'} - \inprod[\tw_0]{\tx} \\
&\leq \tnormnu{\tx'} - \inprod[\tw_0]{\tx} \\
&= \tnormnu{\tx'} - \tnormnu{\tx} \,.
\end{align*}
This completes the proof of the second claim.
\end{proof}


\subsection{Proof of Theorem \ref{thm:main}}
Armed by the technical lemmas above, we now focus on the proof of our main result in Theorem \ref{thm:main}.

\begin{proof}
In the following proofs we use $\tya$ to denote the true agreement tensor as defined in Section \ref{section:def_model}. We also define $\Delta(\ty) = \inprod[\ta]{\dty}$. For any vertex $i\in [n]$, we use $N(i) := \{j : i,j \text{ are in the same cluster}\}$ to denote the neighborhood of $i$. Note that $\abs{N(i)} = k$.

To prove $\tya$ is the optimal solution to problem \eqref{opt:primal}, our goal is to prove $\Delta(\ty) > 0$ for every feasible $\ty \neq \ty^\ast$ satisfying the constraints in problem \eqref{opt:primal}, with high probability. 
It is worth mentioning that the groundtruth $\tya$ is always a feasible solution to problem \eqref{opt:primal}, with $\tnormnu{\tya} = rk^{m/2}$.

To start the analysis we first split $\Delta(\ty)$ as follows
\begin{equation}
\Delta(\ty) = \inprod[\Expect{\ta}]{\dty} + \inprod[\ta - \Expect{\ta}]{\dty}\,.
\label{eq:dy_twoparts}
\end{equation}
We now proceed to characterize these two terms.
Recall that $\ta$ is an $m$-th order symmetric random tensor of size $\R^{n \times \dots \times n}$, and each entry of $\ta$ is independent and follows Bernoulli distribution with parameter $p$ or $q$.

For the first term in $\eqref{eq:dy_twoparts}$, we have
\begin{align}
\inprod[\Expect{\ta}]{\dty}  
&= \inprod[q\onevct^\om + (p-q)\tya]{\dty} \nonumber\\
&= q\inprod[\onevct^\om]{\dty} + (p-q)\inprod[\tya]{\dty} \nonumber\\
&= 0 + (p-q) \sum_{i_1,\dots,i_m} \1[{\tya_{i_1,\dots,i_m} = 1} \land {\ty_{i_1,\dots,i_m} = 0}] \nonumber\\
&= \frac{1}{2}(p-q)\tnormone{\dty} \,,
\label{eq:dy_expectation}
\end{align}
where the last two equalities hold because $\inprod[\onevct^\om]{\ty} = \inprod[\onevct^\om]{\tya} = rk^m$ and $0 \leq \ty \leq 1$. 

To bound second term in $\eqref{eq:dy_twoparts}$, we first rewrite $\inprod[\ta - \Expect{\ta}]{\dty}$ as $\lambda \inprod[\tz]{\dty}$, where $\lambda := \frac{m(m-1)}{2} \cdot C \sqrt{ p(1-q) mn\log m }$. Lemma \ref{lemma:concen_A} guarantees $\tnorm{\tz} \leq \frac{2}{m(m-1)}$ with high probability. 

Before we continue bounding $\lambda \inprod[\tz]{\dty}$, here we take a pause, and instead characterize the subdifferential of $\tnormnu{\tya}$. 
Note that for any $\ty$ and $\tya$, from the definition of tensor spectral norm we have $\tnorm{\ty} = \tnorm{\tya} = k^{m/2}$. 
Setting $\tx = \tya$  in Lemma \ref{lemma:yuan2017}, it can be verified that the conditions
\[
\tw_0 = \pq_{\tya}^0 (\tw_0), \quad \tnorm{\tw_0} = 1, \quad \tnormnu{\tya} = \inprod[\tw_0]{\tya}
\]
holds for the choice $\tw_0 = k^{-m/2}\tya$.
Furthermore, by setting $\tx' = \ty$ and $\tw_1 = \tz$ , it follows that 
\begin{equation}
\tnormnu{\ty}
\geq 
\tnormnu{\tya} + \inprod[k^{-m/2}\tya + \pq_{\ty^{\ast\perp}} (\tz)]{\ty - \tya} \,. 
\label{eq:subdifferential}
\end{equation}
In addition, problem \eqref{opt:primal} requires $\tnormnu{\ty} \leq rk^{m/2}$ for every $\ty$. Since $\tnormnu{\tya} = rk^{m/2}$, we obtain
$\tnormnu{\ty} - \tnormnu{\tya} \leq 0$ for every feasible $\ty$. Combining this with \eqref{eq:subdifferential} we obtain
\begin{equation}
\inprod[k^{-m/2}\tya + \pq_{\ty^{\ast\perp}} (\tz)]{\dty}
\geq 
0 \,.
\label{eq:subgeq0}
\end{equation}

We can now continue to bound 
$\lambda \inprod[\tz]{\dty}$. 
It follows that
\begin{align}
\lambda \inprod[\tz]{\dty}
&= \lambda \inprod[\pq_{\tya} (\tz) + \pq_{\ty^{\ast\perp}} (\tz)]{\dty} \nonumber\\
&\geq \lambda \inprod[\pq_{\tya} (\tz) + \pq_{\ty^{\ast\perp}} (\tz)]{\dty} - \lambda \inprod[k^{-m/2}\tya + \pq_{\ty^{\ast\perp}}(\tz)]{\dty} \nonumber\\
&= \lambda \inprod[\pq_\tya (\tz) - k^{-m/2}\tya]{\dty} \,,
\label{eq:dy_deviation}
\end{align}
where the inequality holds by introducing \eqref{eq:subgeq0}.

Plugging the results from \eqref{eq:dy_expectation} and \eqref{eq:dy_deviation} into \eqref{eq:dy_twoparts}, since the tensor $L_1$ norm and tensor $L_\infty$ norm are dual to each other, we obtain
\begin{align}
\Delta(\ty) 
&\geq \frac{1}{2}(p-q)\tnormone{\dty} + \lambda \inprod[\pq_\tya (\tz) - k^{-m/2}\tya]{\dty} \nonumber\\
&\geq \left( \frac{1}{2}(p-q) - \lambda k^{-m/2}\tnorminf{\tya} - \lambda\tnorminf{\pq_\tya (\tz)} \right) \tnormone{\dty} \nonumber\\
&= \left( \frac{1}{2}(p-q) - \lambda k^{-m/2} - \tnorminf{\pq_\tya (\lambda\tz)} \right) \tnormone{\dty} \,.
\end{align}
Since $\tnormone{\dty}$ is always positive, $\Delta(\ty) \geq 0$ holds if 
\begin{equation}
\frac{1}{2}(p-q) - \lambda k^{-m/2} - \tnorminf{\pq_\tya (\lambda\tz)} 
\geq 0 \,.
\label{eq:suff_condition}    
\end{equation}


We now want to bound the tensor $L_\infty$ norm $\tnorminf{\pq_\tya (\lambda \tz)}$ in \eqref{eq:suff_condition}. Since $\tya$ is a symmetric tensor, the linear spaces spanned by fibers along each mode are identical, i.e., $\L_1(\tya) = \dots = \L_m(\tya) = \L(\tya)$. It follows that $\proj_{\tya}^1 = \dots = \proj_{\tya}^m = \proj_{\tya}$.
By definition of projection operators defined in Section \ref{section:notation}, it follows that 
\begin{align*}
\pq_\tya 
&= \sum_{j=0}^m \pq_\tya^j \\
&= (\Imtx - \proj_\tya) \otimes \proj_\tya \otimes \dots \otimes \proj_\tya 
+ \proj_\tya \otimes (\Imtx - \proj_\tya) \otimes \dots \otimes \proj_\tya + \dots 
 \\
&\quad + \proj_\tya \otimes \dots \otimes (\Imtx - \proj_\tya) + \proj_\tya \otimes \dots \otimes \proj_\tya \\
&= \Imtx \otimes \proj_\tya \otimes \dots \otimes \proj_\tya 
+ \proj_\tya \otimes \Imtx \otimes \dots \otimes \proj_\tya + \dots + \proj_\tya \otimes \proj_\tya \otimes \dots \otimes \Imtx \\
&\quad- (m-1) \proj_\tya \otimes \dots \otimes \proj_\tya \,.
\end{align*}
Note that there are $m$ terms in the second to last line above. Due to symmetry and the fact that $\proj_\tya$ is an orthogonal projection, we have
\[
\tnorminf{\pq_\tya (\lambda \tz)} \leq (2m-1) \cdot \tnorminf{\Imtx \otimes \proj_\tya \otimes \dots \otimes \proj_\tya (\lambda \tz)} \,.
\]
We denote $\bar{\ta} := \Imtx \otimes \proj_\tya \otimes \dots \otimes \proj_\tya (\lambda \tz)$ to be the projected tensor.
We now consider the linear space $\L(\tya)$. By definition of $\tya$, the spanned space has $r$ basis vectors $\vy^{(1)\ast}, \dots, \vy^{(r)\ast}$. Thus the orthogonal projection can be characterized as $\proj_\tya = \frac{1}{k} \sum_{i=1}^r \vy^{(i)\ast} \vy^{(i)\ast\top} $. 
To bound $\tnorminf{\Imtx \otimes \proj_\tya \otimes \dots \otimes \proj_\tya (\lambda \tz)} = \tnorminf{\bar{\ta}}$, note that every single entry in $\bar{\ta}$ is the average of $k^{m-1}$ independent random variables. Mathematically, we have 
\begin{equation}
    \bar{\ta}_{i, i_2, \dots, i_m} = \frac{1}{k^{m-1}}\sum_{j_2 \dots j_m \in N(i)} \lambda \tz_{i,j_2, \dots, j_m}
\label{eq:A_avg}
\end{equation}
for every $i,i_2,\dots,i_m$, where $N(i)$ is the neighborhood of $i$. Bernstein's inequality implies
\begin{equation}
\Prob{\sum_{j_2 \dots j_m \in N(i)} \lambda \tz_{i,j_2, \dots, j_m} \geq \sqrt{2(m+1)k^{m-1} p(1-q) \log n} + \frac{2}{3} (m+1)\log n}
\leq 
n^{-(m+1)} \,.
\label{eq:bern_Z}
\end{equation}
Putting \eqref{eq:A_avg} and \eqref{eq:bern_Z} together, one can see that 
$\bar{\ta}_{i, i_2, \dots, i_m} \geq 
\sqrt{\frac{2(m+1) p(1-q) \log n}{k^{m-1}}} 
+ \frac{2(m+1)\log n}{3k^{m-1}}$ 
holds with probability less than $n^{-(m+1)}$. 
Taking a union bound for $i_2,\dots, i_m$, we have 
\[
\tnorminf{\pq_\tya (\lambda \tz)} \leq (2m-1) \cdot \tnorminf{\bar{\ta}}
\leq 3m\left(\sqrt{\frac{ p(1-q) \log n}{k^{m-1}}} + \frac{\log n}{k^{m-1}} \right)
\]
with probability at least $1-n^{-1}$.
Substitute the results above into \eqref{eq:suff_condition}, we obtain the sufficient condition
\begin{align}
p-q \geq C m(m-1) \sqrt{\frac{ p(1-q) mn \log m}{k^m}} + 6m \sqrt{\frac{ p(1-q) \log n}{k^{m-1}}} + 6m\frac{\log n}{k^{m-1}} \,.
\end{align}
Note that the order of the first two terms on the right-hand side are both dominated by $m^2 \sqrt{\frac{ p(1-q) mn \log m}{k^{m-1}}}$. Combining the first two terms leads to the sufficient condition of 
$
p-q \geq C m^2 \sqrt{\frac{ p(1-q) mn \log m}{k^{m-1}}} + 6m \frac{\log n}{k^{m-1}} \,.
$
Then the first term on the right-hand side dominates the second term, if the mild condition 
$(m^3 \log m)  p(1-q) k^{m-1} \geq 1$
is satisfied.
Then the remaining condition is 
\begin{align}
\frac{p-q}{C\sqrt{ p(1-q) m^5 \log m}} 
\geq 
\sqrt{\frac{n}{k^{m-1}}} \,.
\end{align}
This completes our proof.
\end{proof}

\section{Concluding Remarks}
In this paper, we studied the problem of exact partitioning of high-order planted models. We proposed a novel convex tensor optimization problem which, given the sufficient condition in Theorem \ref{thm:main} is satisfied, recovers the underlying cluster structure perfectly with probability tending to $1$. 

A key motivation behind our work is to develop efficient exact partitioning algorithms for high-order planted models with provable guarantees. To the best of our knowledge the approach is novel for hypergraph models. There have also been some works taking some more conventional approaches, for example, information-theoretic bounds in densest subhypergraphs \citep{buhmannrecovery} and hypergraph SBMs \citep{corinzia2019exact}, as well as detection (i.e., determining whether there exists certain planted structures) in hypergraph SBMs \citep{angelini2015spectral} and hypergraph planted cliques \citep{zhang2018tensor}. These could be a direction in our future works.


\bibliography{0_main.bib}

\bibliographystyle{abbrvnat}


\end{document}